\documentclass{article}
\usepackage[T1]{fontenc}
\usepackage{iclr2027_conference,times}

\usepackage{amsmath,amssymb,amsthm}
\usepackage{booktabs,array}
\usepackage{multirow}
\usepackage{enumitem}
\usepackage{graphicx}
\usepackage{float}
\usepackage{wrapfig}
\usepackage{algorithm}
\usepackage[noend]{algpseudocode}
\usepackage{tikz}
\usetikzlibrary{positioning,arrows.meta,fit,backgrounds,calc,shapes.geometric}
\usepackage{hyperref}
\usepackage{url}
\hypersetup{hidelinks}
\usepackage{xcolor}
\usepackage{colortbl}
\usepackage{pifont}
\usepackage{etoolbox}

\AtBeginDocument{%
  \setlength{\parfillskip}{0pt plus \dimexpr\textwidth-3em\relax}%
  \finalhyphendemerits=1000000\relax
}

\AtBeginEnvironment{table}{\setlength{\belowcaptionskip}{3pt}}
\AtBeginDocument{\setlength{\parskip}{5pt plus 1pt minus 1pt}}
\makeatletter
\def\section{\@startsection{section}{1}{\z@}{-1.2ex plus -0.4ex minus -.2ex}{0.8ex plus 0.2ex minus 0.2ex}{\large\sc\raggedright}}
\def\subsection{\@startsection{subsection}{2}{\z@}{-1.0ex plus -0.4ex minus -.2ex}{0.4ex plus .2ex}{\normalsize\sc\raggedright}}
\def\paragraph{\@startsection{paragraph}{4}{\z@}{0.9ex plus 0.4ex minus .2ex}{-1em}{\normalsize\bf}}
\newenvironment{runinlist}{%
  \par\addvspace{\parskip}%
  \setlength{\parskip}{\z@}\setlength{\parindent}{\z@}%
  \def\paragraph{\@startsection{paragraph}{4}{\z@}{\z@}{-1em}{\normalsize\bf}}%
}{\par}
\makeatother

\definecolor{cardc}{HTML}{1BAF7A}
\definecolor{secretc}{HTML}{EB6834}
\definecolor{inkc}{HTML}{52514E}
\definecolor{boxc}{HTML}{F3F2EE}
\definecolor{toolc}{HTML}{E9EFF6}

\newtheorem{proposition}{Proposition}

\newcommand{\sys}{\textsc{HEXIS}}

\iclrfinalcopy 

\begin{document}

\title{{\sys{}}: Compiling Agent Skills into Extended Finite State Machines}

\author{WorldBuilder013 \\
\texttt{worldbuilder013@126.com}
\And
Minghao Li \\
\texttt{liminghao0914@gmail.com}
}

\maketitle
\lhead{}
\ificlrfinal\renewcommand{\headrulewidth}{0pt}\fi

\begin{abstract}
Agent skills provide instructions and reusable knowledge, yet agents must still repeatedly infer which action to take next. This execution paradigm couples task reasoning with control decisions, which can cause agents to misapply or omit required steps. To improve \textit{agent compliance} with skills, we introduce \textbf{\sys{}}, which compiles agent skills into extended finite state machines (FSMs) that separate knowledge from control flow. Skill knowledge is incorporated into local instructions that guide reasoning and generation within states. The machine records execution progress and intermediate results, while explicit transition conditions determine subsequent operations. Our incremental compiler first maps skill clauses and tool interfaces to state operations, local instructions, data bindings, and transitions. It then aligns development traces with existing states to identify missing operations and dependencies. These are incorporated by adding or reusing states and refining their connections. Updates are accepted only after static checks and replay of the current and all previously accepted traces. Across four benchmarks and four executors, \sys{} improves success over Skill + ReAct by 16.2 percentage points on average. With Qwen3.8-27B, \sys{} also reduces execution tokens by 38.4--88.9\% across benchmarks.
\end{abstract}

\section{Introduction}
\label{sec:introduction}
Agent skills~\citep{agentskillsspec} package domain knowledge, instructions, and resources for reuse across tasks. A skill can combine explanations and worked examples with instructions for using scripts, reference materials, and templates. Agents load these materials when needed and use them to guide reasoning and tool use. SkillsBench~\citep{skillsbench2026} reports that curated skills increase the average pass rate from 33.9\% to 50.5\% on 87 tasks spanning eight domains, illustrating the value of making specialized knowledge available during execution.

A central challenge in skill execution is that agents do not reliably follow skill requirements throughout a task~\citep{sopbench2025}. In common practice, the skill document is supplied as context, and the model infers which actions to take from the instructions and interaction history. Even when the skill requires a particular operation, the model must infer when to perform it~\citep{stateflow2024}. For example, a model may report a failed check without carrying out the revision required by the skill. AgentIF documents difficulties in following complex conditional requirements and tool specifications~\citep{agentif2025}, and SOPBench finds particularly poor procedural compliance among smaller models~\citep{sopbench2025}. In longer tasks, the growing interaction history may further complicate the use of relevant instructions, given models' limitations in using long contexts~\citep{lostmiddle2024}. Consequently, requirements that are clearly stated in a skill can still be omitted or incorrectly applied during execution.

Existing research has pursued three relevant directions. First, skill optimization improves the documents available to agents. SkillOpt~\citep{skillopt2026} revises skill documents using execution feedback and retains changes that improve validation performance. Second, reflection and memory help models infer appropriate actions from prior experience. Reflexion~\citep{reflexion2023} supplies feedback from earlier attempts, while Agent Workflow Memory~\citep{awm2025} provides workflow descriptions extracted from experience. Third, programs organize the execution of models and tools. StateFlow~\citep{stateflow2024} represents task stages and their transitions with a state machine, while Formal Skill~\citep{formalskill2026} uses a program to track progress and restrict available actions. Other methods construct executable workflows from different sources. AFlow~\citep{aflow2025} searches for workflows using execution feedback, TraceCompiler~\citep{tracecompiler2026} derives programs from execution records, and Compile Then Page~\citep{compilethenpage2026} compiles structured procedural requirements into programs.

Skill optimization improves the instructions available to the model, but the model must still determine which operation should follow from the skill and the current execution state. Reflection and memory provide additional experience, yet they likewise leave progress tracking and subsequent operation selection to model inference. StateFlow uses a state machine to organize task execution. The machine directs models and tools to perform the work assigned to the current stage, then determines the next stage from the results~\citep{stateflow2024}. This design reduces the need for the model to repeatedly infer the next step from instructions and interaction history. However, it does not address the prior problem of constructing such an executable representation from an existing agent skill. This compilation problem is nontrivial because a skill document interleaves task knowledge with procedural requirements over operation order, data dependencies, branching, repetition, and termination. A compiler must therefore decide which content remains available to the model for task-dependent reasoning and which execution relations the runtime can represent and enforce explicitly.

We introduce \sys{}, which compiles existing skills into extended finite state machines (FSMs) that execute tasks using language models and tools, as Figure~\ref{fig:motivation_execution} illustrates. Each machine records the current stage, executes its assigned operation, and applies transition conditions to determine the next stage. Models perform the reasoning and generation needed within each stage. This reduces the need for the model to repeatedly infer subsequent steps from the skill and interaction history, helping agents follow skill requirements more reliably. \sys{} constructs these machines from skill documents and execution traces by specifying operations, data dependencies, and conditions for branching, repetition, and termination, while retaining the skill's knowledge and instructions in model prompts.

\begin{figure}[!t]
\centering
\includegraphics[width=\linewidth]{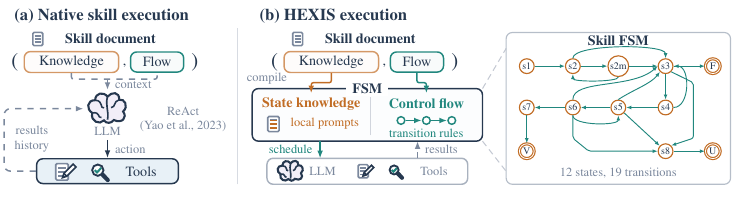}
\caption{\textbf{Separating knowledge from control flow.} Existing works interpret the whole skill as context or memory. \sys{} separates knowledge and flow within an FSM that controls execution. Appendix~\ref{app:real_machine} details this FSM, whose guards are omitted here.}
\label{fig:motivation_execution}
\end{figure}

Our contributions are as follows:{\parfillskip=0pt plus 1fil\par}
\begin{itemize}
\item We propose \sys{}, which compiles agent skills into extended FSMs that keep skill knowledge in state prompts and enforce control flow through explicit transitions.
\item We develop a trace-guided incremental compiler that drafts a machine from the skill and accepts trace-driven updates only after static checks and replay of all accepted traces.
\item We evaluate \sys{} on four benchmarks and four executors: machines compiled once exceed Skill + ReAct in 15 of 16 settings, by 16.2 percentage points on average.
\end{itemize}

\section{Related Work}
\label{sec:related_work}
\textbf{Skill optimization.}
Agent skills package instructions, scripts, and resources that agents load on demand~\citep{agentskillsspec}, and curated skills raise average pass rates across domains~\citep{skillsbench2026}. Skill optimization improves this document, as panel~(a) of Figure~\ref{fig:related_routes} illustrates. SkillOpt~\citep{skillopt2026} revises a skill from scored executions and keeps the edits that improve held-out validation performance. The revised text, however, still leaves every control decision to the executor, so the cumulative deviation analyzed in Section~\ref{subsec:motivation} can persist even with a better skill. Formal Skill~\citep{formalskill2026} makes control explicit by representing skills as programs with executors, hooks, and local runtime state. \sys{} also executes explicit control, but derives it from an existing skill document and refines it with execution traces, so an optimized skill can serve as its input, as Table~\ref{tab:skillopt_combination} shows.

\textbf{Memory-enhanced agents.}
Memory methods let agents reuse experience from earlier executions, as panel~(b) of Figure~\ref{fig:related_routes} illustrates. Reflexion~\citep{reflexion2023} stores verbal feedback on failed attempts for later trials, Agent Workflow Memory~\citep{awm2025} induces reusable workflows from successful trajectories and adds them to the context of later tasks, and ReasoningBank~\citep{reasoningbank2026} distills strategies from both successful and failed trajectories and retrieves the relevant ones for each new task. These methods enlarge what the model knows but not how its execution is controlled: the model must still decide when and how to apply retrieved experience, and the added context competes with a growing interaction history~\citep{lostmiddle2024}. \sys{} instead turns traces into local instructions, variable bindings, and guarded transitions of a reusable machine, whose runtime applies them instead of re-inferring the same control decisions.

\textbf{Workflow search.}
Workflow methods move control from the model into explicit programs, as panel~(c) of Figure~\ref{fig:related_routes} illustrates. StateFlow~\citep{stateflow2024} models task solving as a state machine whose states and transitions are designed for a task domain, and AFlow~\citep{aflow2025} searches over code-represented workflows of model calls with Monte Carlo tree search and execution feedback. Other compilers start from different sources: TraceCompiler~\citep{tracecompiler2026} uses a compiler skill to mine agent traces into mostly deterministic workflows, and Compile Then Page~\citep{compilethenpage2026} compiles machine-readable SOP constraints into executable programs with a capability-gated runtime. None of these methods takes an existing skill document as the object to compile, and a searched workflow can behave very differently across executors, as AFlow does in Section~\ref{subsec:main_results}. \sys{} compiles the skill itself, keeping its knowledge in state prompts for model reasoning while the runtime executes its control flow.

\begin{figure}[t]
\centering
\includegraphics[width=\linewidth]{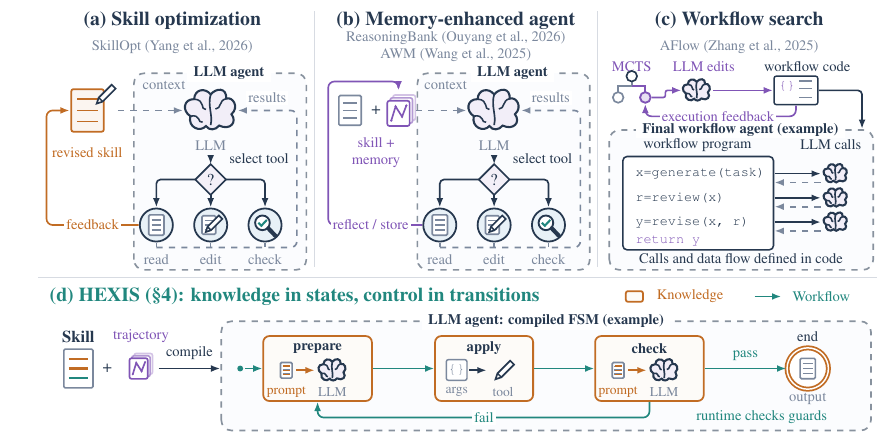}
\caption{\textbf{Separating knowledge from workflow.} In panel~(d), \sys{} retains task knowledge and local operations in orange FSM states, while green transitions define the workflow. Panels~(a) and~(b) provide revised skills or memory as context for the LLM to select tools and control execution. Panel~(c) sketches an AFlow program that composes LLM calls.}
\label{fig:related_routes}
\end{figure}

\section{Motivation and Problem Formulation}
\label{sec:motivation_problem}

\subsection{Motivation}
\label{subsec:motivation}

\textbf{Sources of deviation in native skill execution.}
A skill document $D$ conveys knowledge $\mathcal K_D$ about how to perform operations and control requirements $\mathcal R_D$ about their order, dependencies, branching, repetition, and termination. These components can be interleaved in natural language. Native execution supplies both as context, together with task input $x$ and history $h_t$, and the model selects an operation $a_t$ and its inputs through
\begin{equation}
a_t\sim\pi_\theta(\cdot\mid D,x,h_t),
\label{eq:native_operation}
\end{equation}
where $\pi_\theta$ includes the executor's decoding rule. As panel~(a) of Figure~\ref{fig:motivation_execution} illustrates, each decision couples applying task knowledge with recovering progress, identifying the applicable control requirements, and choosing the next operation. Supplying the requirements as context influences this choice but does not enforce it.
Let $\mathcal A_t=\mathcal A_D(x,h_t)$ denote the operations and inputs permitted by the skill's applicable requirements. This is a semantic reference, not an action filter assumed available to the executor. The local deviation probability is
\begin{equation}
\epsilon_t(h_t)=\Pr\!\left(a_t\notin\mathcal A_t\mid D,x,h_t\right).
\label{eq:requirement_deviation}
\end{equation}
For example, after correctly identifying a failed check, the model may report the failure instead of carrying out the required revision. Correct evidence alone does not ensure $\epsilon_t(h_t)=0$; applying the control requirement remains a model decision, even with deterministic decoding.
For a fixed horizon $K$, let $S_t$ denote the event that the first $t$ decisions respect the applicable requirements, with $S_0$ certain and permitted termination treated as an absorbing compliant outcome; its complement $S_t^c$ is the event that at least one deviation occurs within the first $t$ steps. Define $\bar\epsilon_t=\Pr(S_t^c\mid S_{t-1},D,x)$ over histories that remain compliant. The chain rule gives the cumulative deviation probability
\begin{equation}
F_K=\Pr(S_K^c\mid D,x)=1-\prod_{t=1}^{K}(1-\bar\epsilon_t).
\label{eq:sequential_compliance}
\end{equation}

\textbf{Separating knowledge execution from control.}
Panel~(b) of Figure~\ref{fig:motivation_execution} illustrates how \sys{} retains the relevant $\mathcal K_D$ in state-specific prompts and resource references and compiles $\mathcal R_D$ into states, guarded transitions, and termination rules. Models and tools perform local operations; typed variables store their inputs and results. Let $q_t$ be the current state, $a_{q_t}$ its assigned operation, and $\nu_t^+$ the values after that operation. Following a successful nonterminal operation, the runtime selects the first enabled outgoing edge
\begin{equation}
a_t=a_{q_t},\qquad j^*=\min\{j:g_j(\nu_t^+)=\mathrm{true}\},\qquad q_{t+1}=q'_{j^*},
\label{eq:machine_progression}
\end{equation}
where $g_j$ and $q'_j$ are the condition and destination of edge $j$. A model may use the skill's checking criteria to write a verdict, while the runtime routes a failed verdict to revision. When the requirement is faithfully encoded and its condition correctly established, the prescribed continuation no longer requires another model decision. This removes an opportunity for control deviation at that boundary, targeting the repeated risks in Eq.~\eqref{eq:sequential_compliance}. 

\subsection{Problem Formulation}
\label{subsec:problem_formulation}

Given a skill document $D$, tool specifications, a task input schema, and development traces, skill compilation constructs an executable extended finite state machine $M$. Skill knowledge guides operations within states, and control relations are expressed as conditional transitions. We focus on whether the machine provides sufficient information to determine how the skill permits execution to continue.
For task input $x$ and prior execution history $h$, let $\mathcal B_D(x,h)$ denote the set of continuations permitted by the skill. Each continuation includes operations, their inputs and outputs, and how execution terminates. The machine configuration is denoted by $Z_M=(q,\nu)$, where $q$ is the current state and $\nu$ contains the variable values. For each candidate machine, the mapping from task input and history to configuration is fixed before evaluation.

For a fixed skill $D$ and a common distribution $\mu$ over contexts, let $B=\mathcal B_D(x,h)$ and assume that $B$ is a discrete random variable with finite entropy. Let $\mathfrak M$ denote the family of candidate machines satisfying structural and interface requirements and predefined context storage specifications. The representational objective of compilation is to find a machine in this family that minimizes information loss:
\begin{equation}
\min_{M\in\mathfrak M}\mathcal L_{\mathrm{info}}(M),
\qquad
\mathcal L_{\mathrm{info}}(M)=H_\mu(B\mid Z_M),
\label{eq:information_sufficiency}
\end{equation}
where $\mathcal L_{\mathrm{info}}$ is the information loss of the representation. The conditional entropy $H_\mu(B\mid Z_M)$ measures the remaining uncertainty about the continuations permitted by the skill, given configuration $Z_M$ under distribution $\mu$. A smaller loss indicates that the configuration better distinguishes these execution semantics. This equation specifies the representational objective; the compilation procedure uses static checks and trace replay to decide whether to accept an update, and Appendix~\ref{app:formulation-proof} relates this acceptance rule to the objective.

\section{Method}
\label{sec:method}

To reduce the representation loss defined in the preceding section, \sys{} uses states to record execution stages and variables to store data needed by subsequent operations. As Figure~\ref{fig:system_overview} shows, the compiler constructs an initial machine from the skill document, then uses development traces to add missing operations and dependencies. The model applies skill knowledge through local instructions within states, while the runtime selects transitions using variable values.

\begin{figure}[t]
\centering
\includegraphics[width=\linewidth]{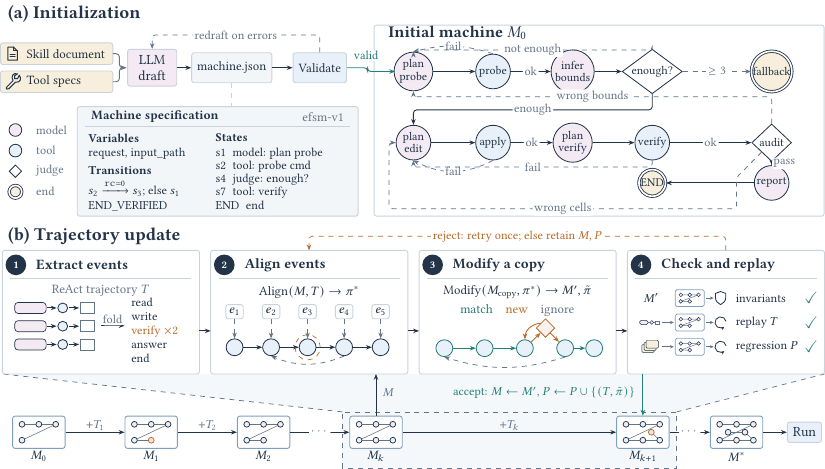}
\caption{Skill compilation consists of initialization and trajectory updates. Initialization generates and validates an initial machine from the skill document and tool specifications; updates extract events, align states, modify a copy, and check and replay it to obtain the final machine for execution.}
\label{fig:system_overview}
\end{figure}

\subsection{State Machine Representation}
\label{sec:representation}
 
An extended finite state machine is
\begin{equation}
\label{eq:efsm}
M=(Q,\ q_0,\ V,\ \mathrm{op},\ E,\ F,\ \tau),
\end{equation}
where $Q$ is the finite state set, $q_0$ the initial state, $V$ the typed variables, $\mathrm{op}$ the operations, $E$ the ordered outgoing edges, $F\subseteq Q$ the terminal states, one of which is the fallback state $q_{\mathrm{fb}}$, and $\tau$ their outcome categories. A non-terminal state is a model, judge, or tool state with a read set $R_q$, a write set $W_q$, and local instructions $p_q$ or a tool call $\mathrm{tool}_q$. Execution starts from $z_0=(q_0,\nu_0)$, with $\nu_0$ assigning task inputs and defaults to $V_0\subseteq V$.
 
With $\nu$ the current variable values, $\nu_{R_q}$ its values on $R_q$, and $L_\theta$ the execution model, state $q$ produces
\begin{equation}
\label{eq:op}
y_q=\begin{cases}
L_\theta(p_q,\ \nu_{R_q}) & \text{model and judge states},\\
\mathrm{tool}_q(\nu_{R_q}) & \text{tool states},
\end{cases}
\qquad \nu^{+}=\nu[W_q\leftarrow y_q],
\end{equation}
where $\nu[W_q\leftarrow y_q]$ sets the variables in $W_q$ to the fields of $y_q$. Model and judge states receive only $p_q$ and $\nu_{R_q}$, so a state cannot depend on anything the compiler did not declare. A judge state writes one label from a fixed set $\Lambda_q$.
 
Each outgoing edge $(g_j,q'_j)$ of $q$ has a guard $g_j$, a Boolean test on the variables. After the operation completes, the runtime takes the first edge whose guard is true,
\begin{equation}
\label{eq:step}
j^{\ast}=\min\{\,j:\ g_j(\nu^{+})\ \text{is true}\,\},\qquad (q,\nu)\ \Rightarrow\ (q'_{j^{\ast}},\ \nu^{+}),
\end{equation}
or moves to $q_{\mathrm{fb}}$ if the operation fails or its output does not parse. A run stops at a terminal state or at a fixed step limit and returns the outcome category of that state. Every non-terminal state ends with a guard-true edge, so $j^{\ast}$ exists, and every loop counter has a limit and an exit, so every cycle is bounded. The pair $z_t=(q_t,\nu_t)$ is the configuration $Z_M$ in Eq.~\eqref{eq:information_sufficiency}.
 
\subsection{Initial Compilation}
\label{sec:initialization}
 
Let $\mathcal T$ list the tools, task input fields, and labels. The construction model $L_\phi$ extracts rules from $D$ on required operations, ordering, prohibitions, termination, and event labels, each with a quotation, and a rule is kept only if its quotation occurs in $D$ and its tools and labels are in $\mathcal T$; the kept rules form $\mathcal R_D$. The construction model then drafts a machine from $D$, $\mathcal R_D$, and $\mathcal T$, redrafting from the checker's error list for a bounded number of rounds, with every clause of $D$ realized as a state or inside a local instruction.
 
$\mathrm{Check}(M)$ holds when five conditions pass: the syntax and graph conditions require well-formed fields and guards, every state reachable from $q_0$ and reaching $F$, and a default edge on every non-terminal state; the variable condition is $R_q\subseteq\mathrm{Def}(q)$ for every $q$, where $\mathrm{Def}(q)$ is the set of variables assigned on every path to $q$, the largest solution of
\begin{equation}
\label{eq:def}
\mathrm{Def}(q_0)=V_0,\qquad
\mathrm{Def}(q)=\bigcap_{\text{edges } p\to q}\big(\mathrm{Def}(p)\cup W_p\big).
\end{equation}
Because $\mathrm{Def}$ is taken over all paths, a variable read after a branch must be assigned on every side of the branch. The terminal condition requires the evidence variables of each terminal state $f$ to be in $\mathrm{Def}(f)$. The rule condition checks $\mathcal R_D$ on the graph: with $Q_o$ the states performing $o$, $F_{\mathrm{ver}}$ the verified terminal states, and $\mathrm{Reach}(M-Q_o)$ the states reachable from $q_0$ without $Q_o$, a required $o$, an ordering $o_1$ before $o_2$, and a prohibited $o$ hold if and only if
\begin{equation}
\label{eq:rules}
F_{\mathrm{ver}}\cap\mathrm{Reach}(M-Q_o)=\varnothing,\qquad
Q_{o_2}\cap\mathrm{Reach}(M-Q_{o_1})=\varnothing,\qquad
Q_o=\varnothing,
\end{equation}
respectively: no verified end is reachable without $o$, and $o_2$ is not reachable without $o_1$. The first draft that passes the syntax and graph conditions becomes $M_0$ if it also satisfies $\mathrm{Check}$; otherwise, or if no draft passes within the round limit, initialization fails.
 
\subsection{Trajectory Updates}
\label{sec:updates}
 
Round $k$ extracts from one development trace the events $T_k=(e_1,\dots,e_n;\ \omega)$, each a tool call or a model output with its inputs, outputs, and reasoning text, and the recorded outcome $\omega$. With $L_\phi(e_i,q)=1$ meaning that the construction model judges state $q$ of $M_k$ able to perform the operation of $e_i$, events are aligned in order:
\begin{equation}
\label{eq:align}
\pi(i)=\begin{cases}
q & \text{a state of the same type as } e_i \text{ with } L_\phi(e_i,q)=1,\\
\star & \text{no such state; a new state is created},\\
\varnothing & e_i \text{ realizes no skill operation and is ignored}.
\end{cases}
\end{equation}
Tool events also require the same tool, reachable states are preferred, and a new edge to a reused state is rejected if it bypasses a required state.
 
A copy of $M_k$ becomes $M'_k$: each $\pi(i)=\star$ becomes a new state $q_i$ written from the reasoning text of $e_i$ and the related clauses, $\pi(i)$ then refers to $q_i$, and consecutive aligned events become edges,
\begin{equation}
\label{eq:modify}
Q'=Q\cup\{\,q_i:\ \pi(i)=\star\,\},\qquad
E'=E\cup\{\,(\pi(i),\ g_i,\ \pi(i')):\ e_{i'} \text{ follows } e_i\,\},
\end{equation}
where $e_{i'}$ is the next aligned event, $g_i$ the guard read from the outcome of $e_i$, and passed values become variable bindings. If two edges leave $p$ under the same guard $g$ toward different $q_1$ and $q_2$, a new judge state $j$ describing the two events replaces them,
\begin{equation}
\label{eq:judge}
\{(p,g,q_1),\ (p,g,q_2)\}\ \to\ \{(p,g,j),\ (j,\ y_j=\lambda_1,\ q_1),\ (j,\ y_j=\lambda_2,\ q_2)\},
\end{equation}
so that the label $y_j$ decides between $q_1$ and $q_2$; each new cycle receives a counter.
 
Let $r_k=(T_k,\pi)$ record the trace and $\mathcal P_k$ be the archive of accepted records, $\mathcal P_0=\varnothing$. Replaying $r$ on $M$ runs $M$ from $z_0$ with every state returning its recorded output instead of calling a model or tool, visiting $s_{1:T}$, and
\begin{equation}
\label{eq:replay}
\mathrm{Replay}(M,r)=\mathbf 1\big[\ \pi\preceq s_{1:T},\ \ q_{\mathrm{fb}}\notin s_{1:T},\ \ \tau(s_T)=\omega\ \big],
\end{equation}
where $\pi\preceq s_{1:T}$ means that the aligned states occur in order in $s_{1:T}$. Because replay uses recorded outputs, a previously accepted trace can fail only if the modification changed a guard or a transition on its path. The candidate is accepted if and only if
\begin{equation}
\label{eq:accept}
(M_{k+1},\mathcal P_{k+1})=\begin{cases}
(M'_k,\mathcal P_k\cup\{r_k\}) & \mathrm{Check}(M'_k)\wedge\mathrm{Replay}(M'_k,r)\ \ \forall r\in\mathcal P_k\cup\{r_k\},\\
(M_k,\mathcal P_k) & \text{otherwise}.
\end{cases}
\end{equation}
A failed candidate is realigned once over reachable states; a second failure discards the trace.
\section{Experiments}
\label{sec:experiments}

This section describes the benchmarks, baselines, and compilation protocol, then reports success rates, request compliance, execution cost, and the combination with skill optimization.

\subsection{Benchmarks and Skills}
\label{subsec:experimental_setup}

We evaluate \sys{} on four benchmarks, each paired with one skill document. Every task is scored as a success or a failure, and we report the success rate on held-out test tasks. The last three benchmarks are split approximately 80/20 with random seed 2026. Appendix~\ref{app:licenses} lists the sources of all benchmarks and skills.

\begin{runinlist}
\textbf{Spreadsheet manipulation.}
The 107 verified cell-manipulation tasks of SpreadsheetBench~\citep{spreadsheetbench2024} ask the agent to produce an edited workbook from a natural-language instruction. We use the SigLeak spreadsheet skill~\citep{sigleak2026} and split the tasks 50/57 into development and test.

\textbf{Mathematical reasoning.}
LiveMathematicianBench~\citep[LiveMath;][]{livemathematicianbench2026} poses mathematician-level problems with answer options, which we shuffle with a fixed seed. Its SigLeak skill is compiled on 487 development tasks and tested on 121, stratified by month.

\textbf{Data analysis.}
The 257-question InfiAgent-DABench release~\citep{dabench2024} asks questions about data files that are answered by writing and running code. We use the Pandas Pro skill~\citep{pandaspro2026} and a 206/51 split stratified by difficulty.

\textbf{Long-context question answering.}
LongSeal from SealQA~\citep{sealqa2026} asks questions whose evidence is spread over long collections of webpages. We run it offline with only the supplied webpages, use its SigLeak skill, and split the questions 203/51.
\end{runinlist}

\subsection{Baselines}
\label{subsec:baselines}

We compare against five methods that share the executor and tools with \sys{} but supply the skill differently: as the document itself, induced experience, revised text, or a searched workflow.

\begin{runinlist}
\textbf{Skill + ReAct}~\citep{react2023} supplies the skill document as context, and the model selects every tool call during interleaved reasoning and acting. It is the native execution that \sys{} replaces.

\textbf{AWM}~\citep{awm2025} induces reusable workflows from successful development trajectories and adds them to the context of later runs.

\textbf{ReasoningBank}~\citep{reasoningbank2026} distills lessons from both successful and failed trajectories and retrieves the relevant ones for each new task.

\textbf{SkillOpt}~\citep{skillopt2026} revises the skill document under execution feedback and selects the revision to execute natively, testing whether better skill text alone closes the gap.

\textbf{AFlow}~\citep{aflow2025} searches for a workflow of model calls using execution feedback and executes the searched workflow in place of the document.
\end{runinlist}

\textbf{Settings.}
Execution models are hosted \texttt{qwen3.6-flash}~\citep{alibaba2026qwen36flash} and GLM-4.7-FlashX, together with Qwen3.5-9B and Qwen3.8-27B served locally with vLLM~\citep{kwon2023vllm} in FP8. Within each comparison, methods share the backbone and OpenCode shell/file tools~\citep{anomaly2026opencode}. Claude Fable 5.1~\citep{anthropic2026fable51} performs compilation, skill optimization, and memory induction. Machines are compiled and refined solely from the skill document and all \texttt{qwen3.6-flash} development trajectories, successful or failed, including observed intermediate results. One machine per skill is then transferred directly to GLM-4.7-FlashX, Qwen3.5-9B, and Qwen3.8-27B, retaining its states, prompts, data bindings, and transitions without target-model recompilation or adaptation. \sys{} is shown in bold in every table.

\textbf{Metric.}
We measure request-level full compliance, the fraction of executions satisfying every applicable, mechanically verifiable request requirement. Let $A$ be an evaluated method, $\mathcal X$ its test set, and $r_A(x)$ the execution trace and artifacts for task $x$. The nonempty set $C_A(x)$ contains the checks applicable to the request and interface supplied to $A$; each check $c$ returns $1$ if satisfied and $0$ otherwise. The metric is
\begin{equation}
\mathrm{RFC}(A)
=\frac{1}{|\mathcal X|}
\sum_{x\in\mathcal X}
\prod_{c\in C_A(x)} c\bigl(r_A(x)\bigr).
\label{eq:request_compliance}
\end{equation}
Here, $|\mathcal X|$ is the number of tasks, and the product is one only when all applicable checks pass. Figure~\ref{fig:request_compliance} reports percentages. Checks cover file delivery, output format, and tool-use requirements independently of answer correctness. For local models on LiveMath, escape sequences in outputs are normalized identically for all methods before checking.
\subsection{Main Results}
\label{subsec:main_results}

\textbf{Compiled execution improves success across tasks and executors.}
Table~\ref{tab:main_results} shows that \sys{} improves on Skill + ReAct in 15 of 16 settings and is best or tied-best in 11. Its gains are largest on LiveMath, at 31.4--38.0 percentage points, and smallest on DABench, at 1.9--4.0 percentage points, where native success is already 78.4--86.3\%. Adding experience or revising skill text helps inconsistently: AWM, ReasoningBank, and SkillOpt each fall below Skill + ReAct on LiveMath with at least two executors. AFlow's searched workflows are competitive on LiveMath but drop to 51.0\% and 15.7\% on DABench with the local executors, whereas machines compiled only from \texttt{qwen3.6-flash} traces improve 11 of 12 settings on the other executors without recompilation. Appendix~\ref{app:compiler_model} shows that the gains persist with Sonnet 5 as the compiler.

\begin{table}[H]
\centering
\caption{Success rates (\%) on four benchmarks using four execution models. Bold and underline mark the best and second-best result for each benchmark and model.}
\label{tab:main_results}
\begingroup
\fontsize{9}{10.5}\selectfont
\setlength{\tabcolsep}{5pt}
\setlength{\aboverulesep}{1.5pt}
\setlength{\belowrulesep}{1.5pt}
\begin{tabular}{@{}llcccc@{}}
\toprule
\textbf{Benchmark} & \textbf{Method} & \textbf{Qwen3.6-flash} & \textbf{GLM-4.7-FlashX} & \textbf{Qwen3.5-9B} & \textbf{Qwen3.8-27B} \\
\midrule
\multirow{6}{*}{\shortstack[l]{\textbf{Spreadsheet}\\\textbf{Bench}}} & Skill + ReAct & 45.6\% & 22.8\% & 33.3\% & 56.1\% \\
 & AWM & \underline{61.4\%} & 31.6\% & 35.1\% & 59.6\% \\
 & ReasoningBank & 43.9\% & 15.8\% & 33.3\% & 63.2\% \\
 & SkillOpt & 59.6\% & \underline{40.4\%} & \textbf{47.4\%} & 66.7\% \\
 & AFlow & 36.8\% & 22.8\% & \underline{38.6\%} & \textbf{73.7\%} \\
 & \textbf{\sys{}} & \textbf{75.4\%} & \textbf{47.4\%} & \underline{38.6\%} & \underline{71.9\%} \\
\midrule
\multirow{6}{*}{\textbf{LiveMath}} & Skill + ReAct & 44.6\% & 12.4\% & 40.5\% & 33.9\% \\
 & AWM & 32.2\% & 14.9\% & 38.0\% & 13.2\% \\
 & ReasoningBank & 37.2\% & \phantom{0}8.3\% & 39.7\% & 41.3\% \\
 & SkillOpt & 38.0\% & 12.4\% & 37.2\% & 37.2\% \\
 & AFlow & \underline{58.7\%} & \underline{34.7\%} & \underline{41.3\%} & \underline{43.8\%} \\
 & \textbf{\sys{}} & \textbf{76.9\%} & \textbf{48.8\%} & \textbf{71.9\%} & \textbf{71.9\%} \\
\midrule
\multirow{6}{*}{\textbf{DABench}} & Skill + ReAct & 78.4\% & \underline{78.4\%} & 82.4\% & \underline{86.3\%} \\
 & AWM & 74.5\% & \underline{78.4\%} & 82.4\% & \underline{86.3\%} \\
 & ReasoningBank & 80.4\% & 74.5\% & 78.4\% & 82.4\% \\
 & SkillOpt & \textbf{88.2\%} & \textbf{82.4\%} & \underline{84.3\%} & \underline{86.3\%} \\
 & AFlow & 80.4\% & \underline{78.4\%} & 51.0\% & 15.7\% \\
 & \textbf{\sys{}} & \underline{82.4\%} & \textbf{82.4\%} & \textbf{86.3\%} & \textbf{88.2\%} \\
\midrule
\multirow{6}{*}{\textbf{LongSeal}} & Skill + ReAct & \phantom{0}7.8\% & \phantom{0}7.8\% & \underline{23.5\%} & 17.6\% \\
 & AWM & 11.8\% & \phantom{0}\underline{9.8\%} & 19.6\% & 23.5\% \\
 & ReasoningBank & \phantom{0}7.8\% & \phantom{0}7.8\% & \underline{23.5\%} & 23.5\% \\
 & SkillOpt & \underline{13.7\%} & \textbf{19.6\%} & \textbf{27.5\%} & \underline{25.5\%} \\
 & AFlow & \underline{13.7\%} & \phantom{0}\underline{9.8\%} & 19.6\% & \textbf{33.3\%} \\
 & \textbf{\sys{}} & \textbf{21.6\%} & \textbf{19.6\%} & \underline{23.5\%} & 23.5\% \\
\bottomrule
\end{tabular}
\endgroup
\end{table}

\textbf{Runtime control raises request compliance.}
Figure~\ref{fig:request_compliance} shows that LiveMath, where the largest success gains occur, is also where native execution complies least: Skill + ReAct fully complies on only 23.1--79.3\% of tasks, and \sys{} raises this to 91.7--100\%. On DABench, native compliance is already 88.2--98.0\%, and the success gains are small. The improvement comes from runtime control rather than from the compiled text, since executing the same machine as a prompt, as Appendix~\ref{app:prompt_only} does, lowers success by 16.8 and compliance by 28.0 percentage points on average. Compliance is necessary but not sufficient: with \texttt{qwen3.6-flash} on SpreadsheetBench, SkillOpt and \sys{} both reach 100\% compliance yet succeed on 59.6\% and 75.4\% of tasks, so how states apply skill knowledge also matters.

\suppressfloats[t]
\begin{figure}[t]
\centering
\includegraphics[width=\linewidth]{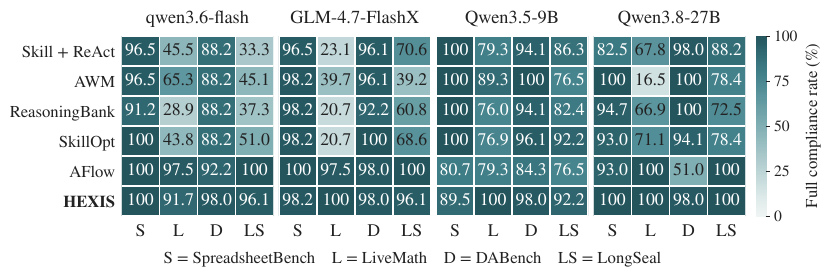}
\vspace{-0.5em}
\caption{Full compliance rates (\%) across four models and benchmarks.}
\label{fig:request_compliance}
\end{figure}

\textbf{Token cost depends on the executor.}
Figure~\ref{fig:main_results_tokens} shows that \sys{} saves the most tokens on long-context LongSeal, 90.0\% relative to Skill + ReAct with \texttt{qwen3.6-flash} and 80.8\% even with Qwen3.5-9B, consistent with each state reading only its declared inputs rather than the accumulated history. Costs rise slightly with \texttt{qwen3.6-flash} on SpreadsheetBench and LiveMath, by 4.9\% and 8.9\%, where the machines add explicit verification steps. The balance otherwise depends on the executor: with Qwen3.8-27B, \sys{} is the cheapest method on every benchmark, using 38.4--88.9\% fewer tokens than Skill + ReAct, and with GLM-4.7-FlashX it cuts costs by 28.4--94.5\%, whereas Qwen3.5-9B uses more tokens on the other three benchmarks. Appendix~\ref{app:trace_diagnostics} reports machine sizes and execution diagnostics.

\begin{figure}[H]
\centering
\includegraphics[width=\linewidth]{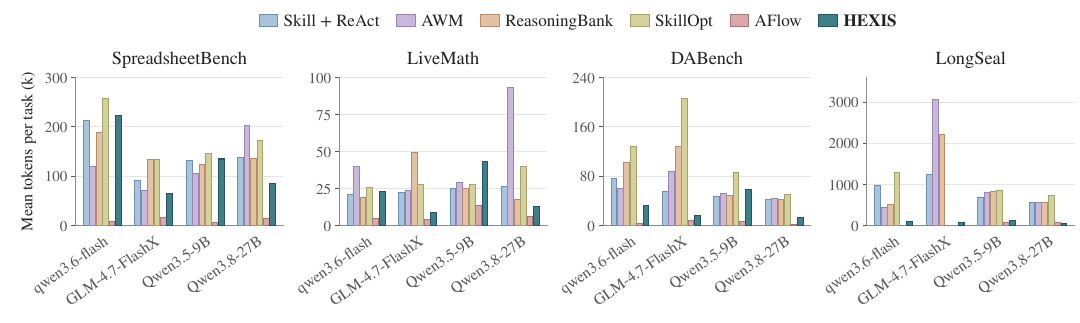}
\vspace{-2em}
\caption{Mean input and output tokens per task (k); each benchmark uses its own axis scale.}
\label{fig:main_results_tokens}
\end{figure}

\begingroup
\setlength{\intextsep}{6pt plus 2pt minus 2pt}
\setlength{\columnsep}{12pt}
\begin{wraptable}{r}{0.50\linewidth}
\setlength{\abovecaptionskip}{0pt}
\setlength{\belowcaptionskip}{3pt}
\centering
\caption{Combining skill optimization with state machine execution on SpreadsheetBench. Tokens are mean total execution tokens per task.}
\label{tab:skillopt_combination}
\small
\setlength{\tabcolsep}{4pt}
\begin{tabular}{@{}llrr@{}}
\toprule
\shortstack[l]{Skill\\document} & Execution & \shortstack{Success\\(\%)} & \shortstack{Tokens\\(k)} \\
\midrule
Original & Skill + ReAct & 45.6\% & 213 \\
Original & \textbf{\sys{}} & 75.4\% & 223 \\
SkillOpt & Skill + ReAct & 59.6\% & 257 \\
SkillOpt & \textbf{\sys{}} & \textbf{84.2\%} & \textbf{69} \\
\bottomrule
\end{tabular}
\end{wraptable}
\textbf{Content optimization and execution structure are complementary.}
On SpreadsheetBench, we compile the original and the SkillOpt-optimized documents and compare each with native Skill + ReAct execution.
Table~\ref{tab:skillopt_combination} shows that SkillOpt with \sys{} reaches 84.2\% success at 69k tokens per task, a gain of 24.6 percentage points and a 73.2\% token reduction over native execution of the optimized skill. SkillOpt refines the skill's guidance, while \sys{} organizes its application through explicit operations, data dependencies, and transitions. The further gain of 8.8 percentage points over \sys{} on the original skill suggests that better skill content still helps after compilation.
\par
\endgroup

\section{Conclusion}
\label{sec:conclusion}
\textbf{Limitations.}
\sys{} depends on the quality and coverage of skill documents and development traces, so requirements or branches that they never express may be missing from the machine. Static checks and trace replay validate recorded execution paths but do not guarantee correct in-state reasoning or coverage of unseen situations. Targeted trace collection and stronger in-state verification are promising directions for future work.

\textbf{Conclusion.}
\sys{} compiles skills into extended FSMs with explicit control and data dependencies, retaining in-state reasoning and refining machines through trace alignment, validation, and replay. Across four benchmarks and four executors, it improves native execution in 15 of 16 settings and leads or ties in 11. Source-model compilation supports cross-model reuse without target-model recompilation, while retaining the selected executor's reasoning within each state.

\label{sec:main_end}
\clearpage

\subsection*{AI Use Statement}
We used generative AI tools to assist with research execution, including writing analysis scripts and interpreting experimental results. We also used these tools to draft and revise parts of the manuscript, improve readability, and assist with translation. The authors take full responsibility for the accuracy, originality, and integrity of the final content.

\subsection*{Ethics Statement}
This work aims to improve the reliability and transparency of LLM-based skill execution, with evaluations conducted on existing benchmarks. Since the framework can invoke tools and modify artifacts, practical deployment should respect user authorization, apply appropriate access controls, and retain human oversight for consequential actions. Benchmark data, third-party skills, and model services should be used according to their applicable licenses and terms.

\subsection*{Reproducibility Statement}
The method section describes state-machine construction, trace-guided refinement, validation, and execution, and Appendix~\ref{app:formulation-proof} states the assumptions underlying the theoretical analysis. The experimental setup specifies the benchmarks, baselines, execution models, compilation and transfer protocol, and evaluation metrics. The appendices also provide execution statistics and a concrete state-machine example. Code and supporting experimental artifacts will be released at \url{https://anonymous.4open.science/r/HEXIS-575F/}.

\bibliographystyle{iclr2027_conference}
\begingroup
\def\UrlBreaks{\do\@\do\\\do\/\do\!\do\_\do\|\do\;\do\>\do\]\do\)\do\,\do\?\do\'\do+\do\=\do\#}
\bibliography{refs}
\endgroup

\clearpage
\appendix
\numberwithin{equation}{section}
\numberwithin{figure}{section}
\numberwithin{proposition}{section}



\section{Proof That the Compilation Procedure Satisfies the Problem Formulation}
\label{app:formulation-proof}
 
Section~\ref{subsec:problem_formulation} asks for a machine whose configuration carries the information needed to continue execution as the skill permits, and measures the shortfall by the representation loss of Eq.~\eqref{eq:information_sufficiency}. This appendix proves that the compilation procedure of Section~\ref{sec:method} meets this objective as traces accumulate: after $n$ accepted updates the loss of the compiled machine is at most $\varepsilon_nH(B)+h_2(\varepsilon_n)$ with $\varepsilon_n$ of order $|M_n|/n$ given in Eq.~\eqref{eq:a-err}, so the loss vanishes as the number of traces grows, and the machine identifies the full set of permitted continuations once the traces rule out every wrong reading of the skill. The argument rests on one property of the procedure: the acceptance rule of Section~\ref{sec:updates} keeps a candidate machine only if it reproduces every trace accepted before, so the compiled machine is always consistent with all the traces it has learned from. From this property it follows that a machine consistent with $n$ independent traces reproduces a new trace with high probability, and that while a machine reproduces a trace its configuration determines the next operation at every step.
 
We say that a machine $M$ reproduces a trace $r$ if $\mathrm{Replay}(M,r)=1$ in Eq.~\eqref{eq:replay}: run on the task input and the recorded outputs of $r$, the machine performs the recorded operations in order and ends in the recorded outcome. Let $\nu$ be the distribution of the traces used for updates, and assume that the traces of different rounds are independent draws from $\nu$. The execution model is deterministic in this appendix, so the operation a machine performs next is a function of its configuration. Machines are stored as text over a vocabulary of size $C$, and $|M|$ is the length of the description of $M$.
\newtheorem{lemma}[proposition]{Lemma}
\begin{lemma}
\label{lem:consistent}
For every $n$, $\mathrm{Check}(M_n)$ holds and $M_n$ reproduces every trace in $\mathcal P_n$.
\end{lemma}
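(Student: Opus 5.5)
We argue by induction on $n$, using only the acceptance rule of Eq.~\eqref{eq:accept} and the definition of initialization in Section~\ref{sec:initialization}. The invariant is the conjunction stated in the lemma: $\mathrm{Check}(M_n)$ holds and $\mathrm{Replay}(M_n,r)=1$ for every $r\in\mathcal P_n$.

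\emph{Base case.} For $n=0$, the procedure only proceeds when initialization succeeds. By construction, $M_0$ is the first draft that passes the syntax and graph conditions, and it is kept only if it also satisfies $\mathrm{Check}$. Otherwise initialization fails and there is no sequence $(M_n)$ to reason about. So $\mathrm{Check}(M_0)$ holds. Since $\mathcal P_0=\varnothing$, the replay part holds vacuously.

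\emph{Inductive step.} Assume the invariant for $(M_k,\mathcal P_k)$ and split on the two branches of Eq.~\eqref{eq:accept}.

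In the accepting branch, $(M_{k+1},\mathcal P_{k+1})=(M'_k,\mathcal P_k\cup\{r_k\})$. This branch is taken only when $\mathrm{Check}(M'_k)$ holds and $\mathrm{Replay}(M'_k,r)=1$ for all $r\in\mathcal P_k\cup\{r_k\}$. That is exactly the invariant for the new pair.

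In the rejecting branch, the pair is unchanged, so the invariant carries over directly from the hypothesis.

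\emph{Realignment and discarded traces.} The remark after Eq.~\eqref{eq:accept} allows one realignment, and a second failure discards the trace. I would note that a realigned candidate still passes through the same acceptance test. A discarded trace leaves the pair unchanged. So these are just further instances of the two branches. I would index $n$ by rounds, letting each attempt count as a round; indexing by accepted updates works equally well, since rejections do not change the pair.

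\emph{Main obstacle.} The only real subtlety is being precise that $r$ in $\mathcal P_k$ stores its alignment $\pi$ fixed at acceptance time. Then $\mathrm{Replay}(M,r)$ is a deterministic function of $M$ and $r$, with recorded outputs and no model calls. So "reproduces" means the same thing at acceptance time and afterward, and no stochasticity of the executor enters. I would state this explicitly, citing the definition in Eq.~\eqref{eq:replay}, to close the argument.
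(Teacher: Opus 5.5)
Your proof is correct and follows the paper's argument: induction with $M_0$ passing $\mathrm{Check}$ and $\mathcal P_0=\varnothing$ as the base case, and the two branches of Eq.~\eqref{eq:accept} as the inductive step. Your remarks on realignment, on indexing, and on $\mathrm{Replay}$ being a deterministic predicate of $(M,r)$ are harmless clarifications that the paper leaves implicit.
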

 
\begin{proof}
$M_0$ passes the checks and $\mathcal P_0=\varnothing$. If the claim holds for $n$, then by Eq.~\eqref{eq:accept} either $M_{n+1}=M_n$ and $\mathcal P_{n+1}=\mathcal P_n$, or $M_{n+1}=M'_n$ passes the checks and reproduces every trace in $\mathcal P_n\cup\{r_n\}=\mathcal P_{n+1}$.
\end{proof}
 
The lemma is the only property of the compiler used in Propositions~\ref{prop:generalization} and~\ref{prop:loss}. For a machine $M$, let $\mathrm{err}(M)$ denote $\Pr_{r\sim\nu}[\mathrm{Replay}(M,r)=0]$, the probability that $M$ fails to reproduce a new trace.
 
\begin{proposition}
\label{prop:generalization}
With probability at least $1-\delta$ over the traces, the machine after $n$ accepted updates satisfies
\begin{equation}
\label{eq:a-err}
\mathrm{err}(M_n)\le\varepsilon_n,\qquad \varepsilon_n=\frac{2|M_n|\ln(2C)+\ln(1/\delta)}{n}.
\end{equation}
\end{proposition}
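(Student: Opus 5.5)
This is an Occam-style realizable-case bound proved by a union bound over machine descriptions. The only compiler property used is Lemma~\ref{lem:consistent}: $M_n$ reproduces every trace in $\mathcal P_n$. Since $\mathcal P_n$ holds the $n$ accepted records, $M_n$ is a hypothesis with zero empirical error on $n$ samples. Under the appendix assumption these samples are i.i.d.\ draws from $\nu$. A complication is that $M_n$ is chosen after seeing the data and its size is not fixed in advance. So the bound must hold uniformly over all machines, with a confidence budget allotted according to description length.

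\textbf{Fix one machine.} For a machine $M$ fixed before the traces are drawn, with $\mathrm{err}(M)>\varepsilon$, independence gives
$$\Pr\bigl[M \text{ reproduces all } n \text{ traces}\bigr]\le(1-\varepsilon)^n\le e^{-\varepsilon n}.$$

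\textbf{Assign a prior weight.} Every machine is a string over the $C$-letter vocabulary. Encode the length $\ell=|M|$ prefix-freely, for instance by weighting length $\ell$ with $2^{-\ell}$. Then give each string of length $\ell$ weight
$$w(M)=2^{-\ell}C^{-\ell}=(2C)^{-|M|}.$$
Summing over all lengths and all strings gives $\sum_M w(M)\le\sum_{\ell\ge1}2^{-\ell}\le 1$, since there are $C^\ell$ strings of each length. Working in logarithms, $\ln(1/w(M))=|M|\ln(2C)$. The stated $\varepsilon_n$ carries a factor $2$ on this term, so the bound I obtain will be tighter. I will either note the slack or use a cruder weight such as $(2C)^{-2|M|}$, which still sums to at most $1$, to match the constant exactly.

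\textbf{Take a union bound.} For each $M$ set
$$\varepsilon(M)=\frac{\ln(1/w(M))+\ln(1/\delta)}{n}.$$
The probability that some machine with $\mathrm{err}(M)>\varepsilon(M)$ reproduces all $n$ traces is at most
$$\sum_M e^{-n\varepsilon(M)}=\sum_M w(M)\,\delta\le\delta.$$
On the complementary event, which has probability at least $1-\delta$, apply the bound to $M=M_n$. It is consistent with all $n$ traces by Lemma~\ref{lem:consistent}, so $\mathrm{err}(M_n)\le\varepsilon(M_n)\le\varepsilon_n$.

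\textbf{Main obstacle.} The sample that certifies $M_n$ must be an i.i.d.\ sample from $\nu$. Accepted traces are only those $r_k$ for which the update succeeded, so conditioning on acceptance could bias them. I would handle this by letting the union bound range over all machines and taking the $n$ samples to be the traces in $\mathcal P_n$. Because the bound is uniform, data-dependent selection of the machine is harmless. Selection of the samples themselves still needs an argument. My plan is to condition on the indices of the accepted rounds and appeal to the stated assumption that update traces are independent draws from $\nu$. Alternatively, I would note that a machine consistent with all $n$ accepted traces is also what the bound requires, and that discarded traces do not enter it. I expect this justification to be the delicate point, and I would state the assumption explicitly if needed.
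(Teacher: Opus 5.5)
Your argument is the paper's Occam-style union bound with different bookkeeping: the paper unions over the nested classes $\mathcal H_K$ of machines with description length at most $K$, using $|\mathcal H_K|\le C^{K+1}$ and a budget $\delta 2^{-K}$ per class, which is where the factor $2$ in $\varepsilon_n$ comes from, whereas your per-machine weight $(2C)^{-|M|}$ already gives $|M|\ln(2C)$ and so implies the stated bound without inflating the weight. On the selection concern, the paper offers no argument beyond taking $\nu$ to be the distribution of the traces used for updates and assuming independence across rounds, so stating that assumption explicitly is the right move; conditioning on which rounds were accepted would not by itself preserve the i.i.d.\ property, because acceptance depends on the trace being accepted.
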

 
\begin{proof}
Fix $K\ge1$, let $\mathcal H_K$ be the set of machines whose description has length at most $K$, so that $|\mathcal H_K|\le C^{K+1}$, and let $\varepsilon(K)=\big(2K\ln(2C)+\ln(1/\delta)\big)/n$. A fixed machine $M$ with $\mathrm{err}(M)>\varepsilon(K)$ reproduces $n$ independent traces with probability at most $(1-\varepsilon(K))^n\le e^{-\varepsilon(K)n}$. By the union bound, the probability that some $M\in\mathcal H_K$ with $\mathrm{err}(M)>\varepsilon(K)$ reproduces all $n$ traces is at most $C^{K+1}e^{-\varepsilon(K)n}\le \delta\,2^{-K}$. Summing over $K\ge1$, with probability at least $1-\delta$ every machine that reproduces all $n$ traces has $\mathrm{err}(M)\le\varepsilon(|M|)$. By Lemma~\ref{lem:consistent}, $M_n$ reproduces all $n$ traces, and $\varepsilon(|M_n|)=\varepsilon_n$.
\end{proof}
 
The bound holds for whichever machine the compiler returns, because it holds uniformly over all machines that reproduce the accepted traces. Consistency with past traces is therefore what makes the compiled machine transfer to new traces, and the description length $|M_n|$ is the price of the transfer.
 
We now relate reproduction to the representation loss of Eq.~\eqref{eq:information_sufficiency}. Let $\mu$ be the distribution of contexts obtained by drawing $r\sim\nu$ and then a prefix $h$ of $r$ uniformly at random, let $B$ be the operation recorded in $r$ immediately after $h$, where termination counts as an operation, and let $Z_M(x,h)$ be the configuration of $M$ after running on $h$. Let $h_2$ denote the binary entropy function.
 
\begin{proposition}
\label{prop:loss}
For every machine $M$,
\begin{equation}
\label{eq:a-loss}
H_\mu(B\mid Z_M)\le \mathrm{err}(M)\,H(B)+h_2\big(\mathrm{err}(M)\big).
\end{equation}
Consequently, with probability at least $1-\delta$ over the traces, $H_\mu(B\mid Z_{M_n})\le\varepsilon_nH(B)+h_2(\varepsilon_n)$ whenever $\varepsilon_n\le 1/2$.
\end{proposition}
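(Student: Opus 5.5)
The plan is to use Fano's inequality. First I will build an estimator $\hat B=\phi(Z_M)$ of the next recorded operation $B$ from the configuration alone, and bound $\Pr_\mu(\hat B\ne B)$ by $\mathrm{err}(M)$. Since the execution model is deterministic, the operation $M$ performs next is a function of its configuration. So I set $\phi(Z_M)$ to be the operation assigned to the current state $q$ (or termination, with the outcome category $\tau(q)$, if $q\in F$).

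The key step is to relate the two error notions. If $M$ reproduces a trace $r$, then running $M$ on any prefix $h$ of $r$ puts it in the configuration whose next performed operation is the one recorded after $h$. This follows from the definition of $\mathrm{Replay}$ in Eq.~\eqref{eq:replay}: the aligned states occur in order along $s_{1:T}$, no fallback is visited, and the final outcome matches $\omega$. Hence, conditionally on $r$, the event $\hat B\ne B$ can occur only when $\mathrm{Replay}(M,r)=0$. Averaging over $r\sim\nu$ and a uniformly random prefix $h$ then gives $P_e:=\Pr_\mu(\hat B\ne B)\le\mathrm{err}(M)$.

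I expect this step to be the main obstacle. Replay only requires the aligned events to appear as a subsequence of the visited states, and ignored events ($\pi(i)=\varnothing$) have no state. So "the operation recorded immediately after $h$" must be read at the granularity of aligned operations. Equivalently, I will take $B$ to range over aligned operations, with the prefix $h$ indexing positions in $\pi$, and state this identification explicitly rather than prove a stronger step-by-step match.

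With $P_e\le\mathrm{err}(M)$ in hand, I apply Fano in the form that does not depend on alphabet size. Let $E=\mathbf 1[\hat B\ne B]$. Then
\[
H(B\mid Z_M)\le H(B\mid \hat B)\le H(E)+P_e\,H(B\mid \hat B,E=1)+(1-P_e)\cdot 0 .
\]
I bound $H(B\mid\hat B,E=1)\le H(B\mid E=1)$. To avoid an alphabet-size term I will instead use the decomposition $H(B\mid Z_M)\le H(E)+H(B\mid Z_M,E)$ together with $P_e\,H(B\mid Z_M,E=1)$, and bound the latter by $\le H(B)$ in the crude sense of the paper's statement. If this comparison is not literally valid, I will use $H(B\mid E=1)\Pr(E=1)\le H(B)$ up to the $h_2$ term, via $H(B)\ge H(B\mid E)\ge P_e H(B\mid E=1)$. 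This yields $H(B\mid Z_M)\le h_2(P_e)+H(B)$-weighted terms. Finally, since $P_eH(B)+h_2(P_e)$ is nondecreasing on $[0,1/2]$, I replace $P_e$ by $\mathrm{err}(M)$ when $\mathrm{err}(M)\le1/2$.

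The consequence then follows from Proposition~\ref{prop:generalization}. With probability at least $1-\delta$ we have $\mathrm{err}(M_n)\le\varepsilon_n$. When $\varepsilon_n\le1/2$, the same monotonicity of $pH(B)+h_2(p)$ on $[0,1/2]$ gives the stated bound.
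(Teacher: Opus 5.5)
\textbf{Gap: the step replacing $H(B\mid Z_M,E=1)$ by $H(B)$.} Your decomposition is essentially the paper's. The paper conditions on the replay-failure indicator $F$ directly, writes $H(B\mid Z_M)\le H(F)+H(B\mid Z_M,F)$, and uses that $B$ is a function of $Z_M$ on $\{F=0\}$. Using $F$ rather than your estimator error $E$ gives $H(F)=h_2(\mathrm{err}(M))$ exactly. You would then need neither $P_e\le\mathrm{err}(M)$ nor the monotonicity step, which in your version limits the first claim to $\mathrm{err}(M)\le 1/2$ although it is stated for every $M$.

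The genuine gap is the step you flagged and did not close. Conditioning on an \emph{event} can increase entropy. The only valid bound is $H(B\mid Z_M,E=1)\le H(B\mid E=1)$, and $H(B\mid E=1)$ can exceed $H(B)$. Your fallback $P_e\,H(B\mid E=1)\le H(B\mid E)\le H(B)$ yields only $H(B\mid Z_M)\le h_2(P_e)+H(B)$. That is weaker than the trivial bound $H(B\mid Z_M)\le H(B)$: the phrase ``$H(B)$-weighted terms'' hides that the factor $P_e$ in front of $H(B)$ has been lost.

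\textbf{The inequality fails in general.} No argument can close this step, because \eqref{eq:a-loss} does not follow from the facts being used. Take $Z_M$ constant and $\Pr[F=1]=p$. Let $B=b_0$ on $\{F=0\}$ and $B$ uniform over $K$ other operations on $\{F=1\}$. Then $H(B\mid Z_M)=H(B)=h_2(p)+p\log_2K$, and
\begin{equation*}
H(B\mid Z_M)-\bigl(p\,H(B)+h_2(p)\bigr)=p\bigl((1-p)\log_2K-h_2(p)\bigr).
\end{equation*}
This is positive already for $p=0.1$ and $K=2$, giving $0.569$ versus $0.526$ bits. The construction embeds in the paper's setting. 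For example, use one-step traces sharing one task input, so every empty-prefix context has configuration $z_0$, and let the failing traces' first tool calls range over sufficiently many distinct arguments.

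The paper's proof asserts ``$H(B\mid Z_M,F=1)\le H(B)$'' without justification and has exactly this flaw, so the missing idea is not there either.

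\textbf{What your argument does prove.} Your decomposition gives $H_\mu(B\mid Z_M)\le h_2(\mathrm{err}(M))+\mathrm{err}(M)\,H(B\mid F=1)$. Hence $H_\mu(B\mid Z_M)\le h_2(\mathrm{err}(M))+\mathrm{err}(M)\log_2 m$ when $B$ takes at most $m$ values, which is the usual Fano form. This bound still tends to $0$ as $\mathrm{err}(M)\to0$ and combines with Proposition~\ref{prop:generalization} as intended. The statement must either be changed to this form or gain an assumption such as $H(B\mid F=1)\le H(B)$. Your treatment of the aligned-operation granularity is fine, and more explicit than the paper's one-line claim.
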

 
\begin{proof}
Let $F$ indicate that $M$ fails to reproduce the trace from which the context was drawn, so $\Pr[F=1]=\mathrm{err}(M)$. Then $H(B\mid Z_M)\le H(B\mid Z_M,F)+H(F)$. If $F=0$, the machine follows the trace, its configuration after $h$ is $Z_M(x,h)$, and the operation it performs next is a function of that configuration and equals the recorded operation $B$, so $H(B\mid Z_M,F=0)=0$. If $F=1$, $H(B\mid Z_M,F=1)\le H(B)$. Hence $H(B\mid Z_M,F)\le\mathrm{err}(M)H(B)$, and $H(F)=h_2(\mathrm{err}(M))$. The second claim follows from Proposition~\ref{prop:generalization} because $h_2$ is increasing on $[0,1/2]$.
\end{proof}
 
The loss in Proposition~\ref{prop:loss} is measured against the operations that traces record, since a trace reveals the permitted continuations one at a time. The permitted set $\mathcal B_D$ itself is identified when the traces rule out every wrong reading of the skill. For a machine $M$ and a context $(x,h)$, let $s_M(x,h)$ be the set of continuations that $M$ can execute from $Z_M(x,h)$; it is a function of $Z_M(x,h)$. A machine reproduces a trace exactly when every continuation recorded in the trace lies in $s_M$ at the corresponding prefix, so by Lemma~\ref{lem:consistent} the set $s_{M_n}$ contains every continuation recorded in $\mathcal P_n$. A map $s$ from contexts to sets of continuations is called incorrect if $\Pr_\mu[s(x,h)\ne\mathcal B_D(x,h)]>0$, and a trace rules out $s$ if it records a continuation outside $s$.
 
Let $\mathcal S$ be a finite set of $N\ge2$ maps from contexts to sets of continuations that contains $s^\star=\mathcal B_D$.
\newtheorem{assumption}{Assumption}
\begin{assumption}
\label{as:consistent}
Every recorded continuation lies in $s^\star$.
\end{assumption}
 
\begin{assumption}
\label{as:rate}
In every round, each incorrect $s\in\mathcal S$ that no earlier trace has ruled out is ruled out by the new trace with probability at least $\gamma\in(0,1]$, conditionally on the earlier traces.
\end{assumption}
 
\begin{assumption}
\label{as:class}
$s_{M_n}\in\mathcal S$ for every $n$.
\end{assumption}
 
\begin{proposition}
\label{prop:identification}
Under Assumptions~\ref{as:consistent} to~\ref{as:class}, the probability $p_n$ that some incorrect member of $\mathcal S$ is not ruled out by the first $n$ traces satisfies $p_n\le(N-1)(1-\gamma)^n$, and
\begin{equation}
\label{eq:a-identify}
\mathbb E\big[H_\mu(\mathcal B_D\mid Z_{M_n})\big]\le p_n\,H(\mathcal B_D)\le (N-1)e^{-\gamma n}H(\mathcal B_D).
\end{equation}
\end{proposition}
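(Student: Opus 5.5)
The plan is to bound $p_n$ by a union bound over the incorrect members of $\mathcal S$, and then to split the conditional entropy on the event that every incorrect map has been ruled out.

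\textbf{Bounding $p_n$.} At least one member of $\mathcal S$, namely $s^\star$, is correct, so there are at most $N-1$ incorrect maps. Fix an incorrect $s$ and let $U_k$ be the event that none of the first $k$ traces rules out $s$. Assumption~\ref{as:rate} gives $\Pr[U_k\mid U_{k-1}]\le 1-\gamma$, since conditionally on the earlier traces the new trace rules out $s$ with probability at least $\gamma$. Iterating the chain rule gives $\Pr[U_n]\le(1-\gamma)^n$. A union bound over the incorrect maps yields $p_n\le(N-1)(1-\gamma)^n$. Finally $(1-\gamma)^n\le e^{-\gamma n}$ gives the last inequality in Eq.~\eqref{eq:a-identify}.

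\textbf{The entropy bound.} Let $G$ be the good event that every incorrect member of $\mathcal S$ has been ruled out by the first $n$ traces, so $\Pr[G^c]=p_n$. On $G$ we argue that $s_{M_n}=s^\star$ $\mu$-almost surely, using three facts:
\begin{itemize}
\item By Assumption~\ref{as:class}, $s_{M_n}\in\mathcal S$.
\item By Lemma~\ref{lem:consistent}, $s_{M_n}$ contains every continuation recorded in $\mathcal P_n$, so no accepted trace rules it out.
\item On $G$ every incorrect map is ruled out, so $s_{M_n}$ is not incorrect, i.e.\ $\Pr_\mu[s_{M_n}\ne\mathcal B_D]=0$.
\end{itemize}
Since $s_{M_n}(x,h)$ is a function of $Z_{M_n}(x,h)$, $\mathcal B_D$ then equals a function of $Z_{M_n}$ $\mu$-almost surely. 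Hence $H_\mu(\mathcal B_D\mid Z_{M_n})=0$ on $G$. On $G^c$ we use the trivial bound $H_\mu(\mathcal B_D\mid Z_{M_n})\le H(\mathcal B_D)$. Taking expectation over the traces gives $\mathbb E[H_\mu(\mathcal B_D\mid Z_{M_n})]\le p_n H(\mathcal B_D)$. Here $H(\mathcal B_D)$ is the entropy under $\mu$, which does not depend on the traces; it is finite by the standing assumption.

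\textbf{Main obstacle.} The delicate step is the claim that consistency with $\mathcal P_n$ excludes ruled-out maps. This depends on how "rules out" is defined. It must be taken to mean that a trace records a continuation outside $s$ at the corresponding prefix, relative to $\mathcal P_n$. We also need the traces used in $G$ to be exactly those in $\mathcal P_n$. If the first $n$ traces include rejected ones, we must either count only accepted updates, as in Proposition~\ref{prop:generalization}, or note that a rejected trace still rules out maps but $M_n$ need not contain its continuations. I would resolve this by indexing rounds by accepted traces, so that the $n$ traces are exactly $\mathcal P_n$. Assumption~\ref{as:consistent} is needed only to guarantee that $s^\star$ itself is never ruled out, so that $G$ is attainable and the count $N-1$ is correct.
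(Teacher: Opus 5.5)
Your proposal is correct and follows the paper's proof essentially step for step: you bound the survival of each incorrect map by $(1-\gamma)^n$ via Assumption~\ref{as:rate} and take a union bound, then split on the event that all incorrect maps are ruled out, where Lemma~\ref{lem:consistent} and Assumption~\ref{as:class} make $s_{M_n}$ agree with $\mathcal B_D$ and the trivial bound $H(\mathcal B_D)$ covers the complement (your $\mu$-almost-sure phrasing is slightly more careful than the paper's $s_{M_n}=s^\star$). Your identification of the first $n$ traces with $\mathcal P_n$ only makes explicit what the paper leaves implicit when it speaks of ``recorded continuations''; one small correction to your closing remark is that the count $N-1$ follows from $s^\star\in\mathcal S$ and the definition of an incorrect map, not from Assumption~\ref{as:consistent}.
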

 
\begin{proof}
By Assumption~\ref{as:rate}, a fixed incorrect member of $\mathcal S$ survives $n$ rounds with probability at most $(1-\gamma)^n$, and the union bound over the at most $N-1$ incorrect members gives the bound on $p_n$. If every incorrect member has been ruled out, then each of them excludes some recorded continuation, while $s_{M_n}$ contains all recorded continuations by Lemma~\ref{lem:consistent} and belongs to $\mathcal S$ by Assumption~\ref{as:class}; hence $s_{M_n}=s^\star$, and $\mathcal B_D(x,h)=s_{M_n}(x,h)$ is a function of $Z_{M_n}(x,h)$, so $H_\mu(\mathcal B_D\mid Z_{M_n})=0$. Otherwise $H_\mu(\mathcal B_D\mid Z_{M_n})\le H(\mathcal B_D)$. Taking expectations gives Eq.~\eqref{eq:a-identify}, and $1-\gamma\le e^{-\gamma}$ gives the last inequality.
\end{proof}
 
Assumption~\ref{as:consistent} holds when traces are executions permitted by the skill. Assumption~\ref{as:rate} concerns the traces alone: a trace rules out only maps that exclude a recorded continuation, so $\gamma$ is the smallest probability, over the incorrect members of $\mathcal S$, that a trace records a continuation the member excludes, and maps that permit more than the skill are never ruled out by traces. Assumption~\ref{as:class} is an idealization of the compiler. Lemma~\ref{lem:consistent} shows that $M_n$ never returns to a reading ruled out by an accepted trace, but the checks do not force $s_{M_n}$ into a prescribed finite set.
\section{Supplementary Results}
\label{app:results}

\subsection{Executing Compiled Skills as Prompts or State Machines}
\label{app:prompt_only}

To assess the contribution of program-controlled execution, we compare Skill + ReAct, Prompt-only, and \sys{} with \texttt{qwen3.6-flash}. Skill + ReAct uses the original skill document. Prompt-only mechanically renders the compiled machine as a skill document containing local instructions, tool templates, variable declarations and bindings, ordered transition conditions, and termination rules. The model follows these descriptions to track variables and select subsequent operations. \sys{} executes the machine through its runtime. Table~\ref{tab:prompt_only} reports success and request-level full compliance.

\begin{table}[htbp]
\centering
\begingroup
\fontsize{9}{10.5}\selectfont
\setlength{\tabcolsep}{3pt}
\setlength{\aboverulesep}{1.5pt}
\setlength{\belowrulesep}{1.5pt}
\renewcommand{\arraystretch}{1.05}
\caption{Success and request-level full compliance with \texttt{qwen3.6-flash}. Both metrics are reported as percentages.}
\begin{tabular*}{0.98\linewidth}{@{\extracolsep{\fill}}lcccccc@{}}
\toprule
\multirow{2}{*}{\textbf{Benchmark}}
& \multicolumn{3}{c}{\textbf{Success}}
& \multicolumn{3}{c}{\textbf{Full compliance}} \\
\cmidrule(lr){2-4}\cmidrule(lr){5-7}
& \shortstack{Skill +\\ReAct} & Prompt-only & \textbf{\sys{}}
& \shortstack{Skill +\\ReAct} & Prompt-only & \textbf{\sys{}} \\
\midrule
SpreadsheetBench & 45.6\% & 63.2\% & \textbf{75.4\%} & 96.5\% & 75.4\% & \textbf{100.0\%} \\
LiveMath & 44.6\% & 33.9\% & \textbf{76.9\%} & 45.5\% & 82.6\% & \textbf{91.7\%} \\
DABench & 78.4\% & 80.4\% & \textbf{82.4\%} & 88.2\% & 96.1\% & \textbf{98.0\%} \\
LongSeal & 7.8\% & 11.8\% & \textbf{21.6\%} & 33.3\% & 19.6\% & \textbf{96.1\%} \\
\bottomrule
\end{tabular*}
\label{tab:prompt_only}
\endgroup
\end{table}

Prompt-only explicitly describes the control flow, but leaves transition selection and progress tracking to the model. It therefore provides no programmatic enforcement of the prescribed procedure. Its success rate is lower than \sys{} on all four benchmarks and even falls below native Skill + ReAct on LiveMath, at 33.9\% versus 44.6\%. Executing the machine with \sys{} improves success and full compliance over Prompt-only by 16.8 and 28.0 percentage points on average. These results show that explicit state machine execution contributes to the gains, which cannot be explained by textual optimization alone.

\subsection{Effect of the Compilation Model}
\label{app:compiler_model}

To examine whether the gains depend on Fable 5.1 as the compiler, we compile state machines with Claude Sonnet 5~\citep{anthropic2026sonnet5} at its low effort setting and execute them with \texttt{qwen3.6-flash}. Table~\ref{tab:compiler_model} compares their success rates with Skill + ReAct and the Fable-compiled machines reported in Table~\ref{tab:main_results}.

\begin{table}[htbp]
\centering
\begingroup
\small
\setlength{\tabcolsep}{6pt}
\renewcommand{\arraystretch}{1.08}
\caption{Success rates with \texttt{qwen3.6-flash} execution. State machines are compiled by Fable 5.1 or by Sonnet 5 at the low effort setting.}
\label{tab:compiler_model}
\begin{tabular}{lccc}
\toprule
\textbf{Benchmark} & \textbf{Skill + ReAct}
& \shortstack{\textbf{\sys{}}\\Fable 5.1}
& \shortstack{\textbf{\sys{}}\\Sonnet 5 low} \\
\midrule
SpreadsheetBench & 45.6\% & 75.4\% & 71.9\% \\
LiveMath & 44.6\% & 76.9\% & 72.7\% \\
DABench & 78.4\% & 82.4\% & 80.4\% \\
LongSeal & 7.8\% & 21.6\% & 19.6\% \\
\bottomrule
\end{tabular}
\endgroup
\end{table}

Sonnet-compiled machines outperform Skill + ReAct on all four benchmarks and score only 2.0--4.2 percentage points below the Fable-compiled machines. The gains persist across compilers, consistent with the benefit of separating control flow from model reasoning. Explicit state transitions carry out prescribed operations and reduce repeated inference of the next step, without requiring Fable 5.1 as the compiler.

\subsection{Machine Size and Execution Diagnostics}
\label{app:trace_diagnostics}

We analyze the size and execution behavior of the compiled state machines using \texttt{qwen3.6-flash} as the executor. The statistics are computed from 280 task trajectories: 57 from SpreadsheetBench, 121 from LiveMath, 51 from DABench, and 51 from LongSeal. Each task contributes one execution trajectory. Table~\ref{tab:trace_diagnostics} summarizes the results.

\textbf{Definitions.} We count all serialized states, including terminal states and the fallback placeholder, and every ordered transition once. Entries into the fallback state are intercepted by the \emph{recovery hub} of the execution harness, which either retries from a machine state or hands execution to interpretive fallback. A \emph{recovery task} enters the recovery hub at least once. \emph{Interpretive fallback} starts only when execution actually continues by interpreting the full skill. A \emph{runtime retry} is an explicitly logged \texttt{retry} action at the hub; it differs from an ordinary transition that revisits a state for refinement or verification. Machine steps exclude retry records and interpretive fallback steps. A state revisit is a non-fallback state visit after its first occurrence; it can reflect either a compiled loop or work repeated after a runtime retry. All rates below use the number of retained task trajectories as the denominator.

\begin{table}[H]
\centering
\small
\setlength{\tabcolsep}{3.5pt}
\caption{Sizes and execution diagnostics of compiled state machines using \texttt{qwen3.6-flash}. Retry events count runtime recovery actions, while revisits count repeated visits to non-fallback machine states.}
\label{tab:trace_diagnostics}
\begin{tabular*}{1.0\linewidth}{@{\extracolsep{\fill}}lrrrrrrrr@{}}
\toprule
Benchmark & $N$ & States & Edges & Recovery & Fallback & Retry events & Steps & Revisits \\
 & & & & (tasks) & (tasks) & (total) & (mean) & (mean) \\
\midrule
SpreadsheetBench & 57 & 17 & 35 & 0 (0.0\%) & 0 (0.0\%) & 0 & 24.56 & 10.60 \\
LiveMath & 121 & 12 & 19 & 3 (2.5\%) & 1 (0.8\%) & 3 & 9.04 & 0.11 \\
DABench & 51 & 15 & 29 & 0 (0.0\%) & 0 (0.0\%) & 0 & 13.57 & 1.57 \\
LongSeal & 51 & 16 & 30 & 1 (2.0\%) & 1 (2.0\%) & 1 & 30.20 & 17.31 \\
\bottomrule
\end{tabular*}
\end{table}

\textbf{Graph size and repeated execution.} The machines contain 12--17 states and 19--35 transitions. Compact graphs can nevertheless support substantial repeated execution: SpreadsheetBench revisits a state in 27/57 trajectories, with a mean of 10.60 repeated visits and up to 67 machine steps. The corresponding revisit frequencies are 12/121 for LiveMath, 22/51 for DABench, and 44/51 for LongSeal. These repetitions include re-analysis, tool execution, and output checking; they should not all be labeled error retries. In particular, SpreadsheetBench has no logged hub retry despite its frequent state revisits.

\textbf{Fallback and recovery.} LiveMath records 3 runtime retries across 3 tasks, and LongSeal records 1 across 1 task. Only 1 LiveMath task and 1 LongSeal task enter interpretive fallback; SpreadsheetBench and DABench record none. Thus entry into the recovery hub does not imply that the full skill is reinterpreted. Most trajectories remain within compiled execution; one SpreadsheetBench run reaches its step limit. The retry counts also exclude uninstrumented transport retries and repetitions inside generated tool code.
All three LiveMath recovery tasks enter from \texttt{s1}. Two resume compiled execution after one retry; the third, task \texttt{lm\_202512\_013}, uses one retry followed by three interpretive steps. In LongSeal task \texttt{sq\_013}, the calculation counter reaches its limit at \texttt{s5}; the hub resets that counter and retries from \texttt{s6}. A second counter-limit event leads to two interpretive steps. Both benchmarks use at most one runtime retry per task, with a mean of 0.025 retries per task on LiveMath and 0.020 on LongSeal.

\subsection{An Executed State-Machine Example}
\label{app:real_machine}

We illustrate \sys{} with the frozen LiveMathematicianBench machine used in the main experiments, \texttt{machine\_v11.json}. It contains 12 stored states, 19 explicit transitions, and 18 typed variables. The states comprise five model actions, two judge actions, two tool actions, two ordinary terminal states, and the designated fallback state. The initial state is \texttt{s1}; the artifact declares a 40-step execution limit. Table~\ref{tab:real_machine} summarizes the state operations and reproduces the complete ordered transition structure. Model-prompt descriptions are condensed for readability; no state or explicit edge is omitted.

The integer counters $m$, $w$, and $r$ denote \texttt{meta\_count}, \texttt{s3\_count}, and \texttt{repair\_count}, respectively, and all start at zero. The variable $c$ is the most recent tool's \texttt{returncode}; $n$ and $v$ are \texttt{verify\_note} and \texttt{verify\_verdict}. We write $\bot$ for the stored abstention label. The two task inputs are \texttt{request} and \texttt{output\_path}. Other string variables hold \texttt{analysis}, \texttt{answer\_letter}, \texttt{justification}, \texttt{write\_cmd}, \texttt{edit\_log}, \texttt{file\_content}, \texttt{result}, \texttt{stdout}, and \texttt{stderr}; \texttt{ok} is Boolean. Together with $n$, $v$, $c$, and the three counters, these are all 18 declared variables. The initial values of \texttt{edit\_log}, \texttt{file\_content}, and $n$ are the string \texttt{none}.

\begin{table}[t]
\centering
\small
\setlength{\tabcolsep}{4pt}
\renewcommand{\arraystretch}{1.22}
\caption{The actual LiveMath v11 machine. Outgoing rules are evaluated from top to bottom after the state's operation; ``else'' is the stored unconditional edge. Counter increments occur only when the corresponding edge is taken. $V$, $U$, and $F$ abbreviate \texttt{END\_VERIFIED}, \texttt{END\_UNVERIFIED}, and \texttt{FALLBACK}.}
\label{tab:real_machine}
\begin{tabular*}{1.0\linewidth}{@{\extracolsep{\fill}}p{0.065\linewidth}>{\raggedright\arraybackslash}p{0.44\linewidth}>{\raggedright\arraybackslash}p{0.42\linewidth}@{}}
\toprule
State & Assigned operation & Ordered outgoing rules \\
\midrule
\texttt{s1} & Model: analyze the question, hypotheses, option support, and relative strength; write \texttt{analysis}. & Always $\to\texttt{s2}$ \\
\texttt{s2} & Model: select an option using \texttt{request}, \texttt{analysis}, and $n$; write \texttt{answer\_letter} and \texttt{justification}. & $m\geq1\to\texttt{s3}$;\newline else $\to\texttt{s2m}$ \\
\texttt{s2m} & Judge: check whether the selected option covers the stated conclusion; write $n\in\{\texttt{complete},\texttt{incomplete},\bot\}$. & $n=\texttt{incomplete}\land m<1$\newline $\quad\to\texttt{s2}$, $m\leftarrow m+1$;\newline else $\to\texttt{s3}$ \\
\texttt{s3} & Model: generate \texttt{write\_cmd} to write the selected letter in the required boxed format to \texttt{output\_path}, using \texttt{edit\_log} when available. & $w\geq4\to F$;\newline $w<4\land r\geq2\to\texttt{s8}$;\newline else $\to\texttt{s4}$ \\
\texttt{s4} & Tool: execute \texttt{bash} with \texttt{command=write\_cmd}; bind the returned \texttt{stdout} to \texttt{edit\_log}. & $c=0\to\texttt{s5}$;\newline else $\to\texttt{s3}$, $w\leftarrow w+1$ \\
\texttt{s5} & Tool: execute \texttt{read} with \texttt{filePath=output\_path}; bind the returned \texttt{stdout} to \texttt{file\_content}. & $r\geq2\to\texttt{s8}$;\newline $r<2\land c=0\to\texttt{s6}$;\newline else $\to\texttt{s3}$, $w\leftarrow w+1$ \\
\texttt{s6} & Judge: compare \texttt{file\_content} with \texttt{answer\_letter}; write $v\in\{\texttt{pass},\texttt{wrong\_content},\bot\}$. & $v=\texttt{pass}\to\texttt{s7}$;\newline $v=\texttt{wrong\_content}\land r<2$\newline $\quad\to\texttt{s3}$, $r\leftarrow r+1$;\newline $v=\bot\land r<2$\newline $\quad\to\texttt{s5}$, $r\leftarrow r+1$;\newline else $\to\texttt{s8}$ \\
\texttt{s7} & Model: report the file's content, selected letter, and justification in \texttt{result}. & Always $\to V$ \\
\texttt{s8} & Model: report the intended answer and observed file status in \texttt{result}, explicitly marking it unverified. & Always $\to U$ \\
$V$ & End action with terminal kind \texttt{verified}; return \texttt{result}. & No stored outgoing edges \\
$U$ & End action with terminal kind \texttt{unverified}; return \texttt{result}. & No stored outgoing edges \\
$F$ & Designated fallback state, stored as an end placeholder with terminal kind \texttt{fallback}; intercepted by the execution harness. & No stored outgoing edges \\
\bottomrule
\end{tabular*}
\end{table}

\textbf{How the guards organize execution.}
The selection stage can be revisited once through the explicit \texttt{s2m}$\to$\texttt{s2} edge. Writing is followed by reading and a consistency check before reaching $V$. Here, \texttt{verified} means that the delivered file agrees with the machine's selected letter; it does not certify that the mathematical answer is correct. The two judges also have different abstention behavior: \texttt{s2m} continues to \texttt{s3} on $\bot$, whereas \texttt{s6} requests another read while $r<2$ and otherwise routes to the unverified-report state. These are the actual artifact's rules. Fallback handoff and runtime recovery attempts are implemented by the execution harness, separately from the 19 stored edges and the three machine counters.

\textbf{A recorded execution.}
For held-out task \texttt{lm\_202511\_026}, the \texttt{qwen3.6-flash} trace records the following path:
\begin{equation*}
\begin{aligned}
&\texttt{s1}\to\texttt{s2}\to\texttt{s2m}
 \xrightarrow{n=\texttt{incomplete},\;m\leftarrow1}\texttt{s2}\to\texttt{s3}\\
&\hspace{3em}\to\texttt{s4}\xrightarrow{c=0}\texttt{s5}
 \xrightarrow{c=0}\texttt{s6}\xrightarrow{v=\texttt{pass}}\texttt{s7}\to V.
\end{aligned}
\end{equation*}
The second visit to \texttt{s2} receives the stored \texttt{incomplete} feedback and proceeds directly to \texttt{s3} because $m=1$. Both tool calls return zero, and the read-back judge returns \texttt{pass}. The trace contains 10 state records, two tool calls, and seven model calls, two of which are made by the judges; $w=r=0$ throughout. The result log records 25,894 total tokens, 89.41 seconds, zero runtime retries, and no interpretive fallback. Thus, the repeated selection step is a programmed semantic revision, not a fallback recovery attempt. The task's answer and generated reasoning are omitted here because the control path, labels, and counters suffice to reproduce this execution account.

\clearpage
\section{Licenses}
\label{app:licenses}

Table~\ref{tab:licenses} lists the licenses and URLs of the benchmarks, skills, baselines, models, and serving software used in this work. SigLeak releases its skills through an anonymous repository, and the hosted models are used under their providers' terms of service.

\begin{table}[htbp]
\centering
\small
\setlength{\tabcolsep}{4pt}
\caption{Resource licenses and URLs.}
\label{tab:licenses}
\begin{tabular*}{1.0\linewidth}{@{\extracolsep{\fill}}llp{0.42\linewidth}@{}}
\toprule
Resource & License & URL \\
\midrule
SpreadsheetBench~\citep{spreadsheetbench2024} & CC BY-SA 4.0 & \url{https://huggingface.co/datasets/KAKA22/SpreadsheetBench} \\
LiveMathematicianBench~\citep{livemathematicianbench2026} & Available online & \url{https://huggingface.co/datasets/LiveMathematicianBench/} \\
InfiAgent-DABench~\citep{dabench2024} & Apache 2.0 & \url{https://github.com/InfiAgent/InfiAgent} \\
SealQA~\citep{sealqa2026} & Apache 2.0 & \url{https://huggingface.co/datasets/vtllms/sealqa} \\
\midrule
SigLeak skills~\citep{sigleak2026} & Available online & \url{https://anonymous.4open.science/r/SigLeak-D1DB} \\
Pandas Pro~\citep{pandaspro2026} & MIT & \url{https://github.com/jeffallan/claude-skills} \\
\midrule
AWM~\citep{awm2025} & Apache 2.0 & \url{https://github.com/zorazrw/agent-workflow-memory} \\
ReasoningBank~\citep{reasoningbank2026} & Apache 2.0 & \url{https://github.com/google-research/reasoning-bank} \\
SkillOpt~\citep{skillopt2026} & MIT & \url{https://github.com/microsoft/SkillOpt} \\
AFlow~\citep{aflow2025} & MIT & \url{https://github.com/FoundationAgents/AFlow} \\
\midrule
Qwen3.5-9B & Apache 2.0 & \url{https://huggingface.co/Qwen/Qwen3.5-9B} \\
Qwen3.8-27B~(FP8) & Apache 2.0 & \url{https://huggingface.co/Qwen/Qwen3.8-27B-FP8} \\
qwen3.6-flash~\citep{alibaba2026qwen36flash} & Terms of service & \url{https://www.alibabacloud.com/help/en/model-studio/} \\
Claude Fable 5.1~\citep{anthropic2026fable51} & Terms of service & \url{https://www.anthropic.com/claude-fable-and-mythos-5-1} \\[2pt]
\midrule
OpenCode~\citep{anomaly2026opencode} & MIT & \url{https://github.com/anomalyco/opencode} \\
vLLM~\citep{kwon2023vllm} & Apache 2.0 & \url{https://github.com/vllm-project/vllm} \\
\bottomrule
\end{tabular*}
\end{table}

\end{document}